\theoremstyle{thmstyleone}%
\newtheorem{theorem}{Theorem}
\newtheorem{proposition}[theorem]{Proposition}%
\newtheorem{lemma}[theorem]{Lemma}
\theoremstyle{thmstyletwo}%
\newtheorem{example}{Example}%
\newtheorem{remark}{Remark}%
\theoremstyle{thmstylethree}%
\newtheorem{definition}{Definition}%
\begin{document}

\title[Discovering Elementary Discourse Units in Textual Data Using Canonical Correlation Analysis]{Discovering Elementary Discourse Units in Textual Data Using Canonical Correlation Analysis}


\author*[1]{\fnm{Akanksha} \sur{Mehndiratta}}\email{mehndiratta.akanksha@gmail.com}

\author[1]{\fnm{Krishna} \sur{Asawa}}\email{krishna.asawa@jiit.ac.in}
\equalcont{These authors contributed equally to this work.}

\affil*[1]{\orgdiv{Department of Computer Science \& Engineering and Information Technology}, \orgname{Jaypee Institute of Information Technology}, \orgaddress{\street{A-10, Sector-62}, \city{Noida}, \postcode{201309}, \state{Uttar Pradesh}, \country{India}}}


\abstract{Canonical Correlation Analysis (CCA) has been exploited immensely for learning latent representations in various fields. This study takes a step further by demonstrating the potential of CCA in identifying Elementary Discourse Units(EDUs) that captures the latent information within the textual data.  The probabilistic interpretation of CCA discussed in this study utilizes the two-view nature of textual data, i.e. the consecutive sentences in a document or turns in a dyadic conversation, and has a strong theoretical foundation. 

Furthermore, this study proposes a model for Elementary Discourse Unit(EDU) segmentation that discovers EDUs in textual data without any supervision. To validate the model, the EDUs are utilized as textual unit for content selection in textual similarity task. Empirical results on Semantic Textual Similarity(STSB) and Mohler datasets confirm that, despite represented as a unigram, the EDUs deliver competitive results and can even beat various sophisticated supervised techniques. The model is simple, linear, adaptable and language independent making it an ideal baseline particularly when labeled training data is scarce or nonexistent.}

\keywords{Discourse Modeling, Discourse Structure, EDU Segmentation, Unlabeled Data, Low Resource Language}



\maketitle

\section{Introduction}\label{sec1}

Elementary Discourse Units (EDUs) are the smallest units used to represent sentence meaning in discourse analysis, such as clauses or other small meaningful segments. EDUs form the basic building blocks of larger discourse structures, enabling a better understanding of language structure and improving a variety of NLP tasks. For most NLP-based applications, EDU identification/segmentation serve as a fundamental unit and hence there is a constant need for methods that can perform these tasks efficiently.

Canonical Correlation Analysis (CCA), on the other hand, is a multivariate statistical technique used to explore relationships between two sets of variables. CCA utilizes the multi-view representation of the same latent object to extract representation of the latent state. CCA requires data with multiple views to learn a common latent space and maps each view onto this latent space to discover view-independent representations. 

The multi-view nature of data refers to situations where information about a particular entity or phenomenon is available from multiple perspectives or sources. These views might come from different sources or represent distinct subsets of features. 
Two-view data is a specific case of multi-view representation where there are only two distinct sets of features or perspectives. For instance, in a product review system a textual review and an image of the product will provide two different set of features that may be regarded as two-views. In video analysis, there are often two-views one might be the audio track, capturing the spoken words and sounds, while the other might be the visual content, capturing the scene or facial expressions.

This study presents a probabilistic interpretation of Canonical Correlation Analysis (CCA) for learning a latent space in textual data under a two-view setting. The approach relies on identifying past and future views in textual data, such as consecutive sentences in a document or turns in a dyadic conversation, which can be perceived as two distinct views. By examining the correlation between two sentences, CCA models a latent space and learns Elementary Discourse Units (EDUs) for each sentence, based on the words and their positions in the other sentence. The goal is to identify the hidden or shared intent in these consecutive sentences or utterances using the learned EDUs. These EDUs play a critical role in content selection for modeling discourse in textual data. To the best of our knowledge, this is the first investigation that uses CCA to generate EDUs. 

The study exploits the two-view setting under a conditional independence assumption. The conditional independence assumption, represented in figure \ref{fig:HSM2}, states that the two views (\textit{a\textsubscript{1}} and \textit{a\textsubscript{2}}) are said to be conditionally independent given some hidden(latent) state (\textit{L}). The study proposed here revolves around the conditional independence assumption of the past and future views in textual data on some hidden state and the learning paradigm proposed exploits the assumption mentioned above. It is also easy for one to find applications in text and NLP, under a two-view setting, where the conditional independence assumption holds naturally. This inherent association between the two-view setting and the conditional independence assumption is the foundation for the work presented in this study.

The study also introduces a CCA-based two-view learning framework to design an EDU segmentation model that is simple, linear, and works without supervision, making the model adaptable and suitable for any language, particularly when labeled training data is limited or unavailable. Our CCA-based two-view learning framework can be considered a generalization of widely used CCA-based approaches. We extend this model by considering various informational cues within the textual data as different views, such as consecutive sentences in a document or consecutive utterances in dialogue, unlike traditional CCA, which perceives inputs from two distinct sources or modes as two views \cite{guo2021manifold,jia2019semantically}. 

To understand and analyze the role of the EDUs discovered using CCA, the study utilizes these units to perform a Textual Similarity Task. The proposed model quantifies semantic equivalence based on the units identified in both sentences of a sentence pair. These units establish a shared hidden state for the input sentence pair, which is used for content selection and generates a similarity score. Experiments on the Semantic Textual Similarity Benchmark (STSB) and Mohler datasets confirm that the proposed model is highly competitive and outperforms the state-of-the-art, particularly on small datasets.

\begin{figure}
\centering
\includegraphics{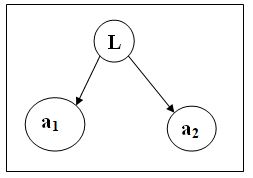}
\caption{Graphical model for Multi-view assumption adapted from \protect\cite{foster2008multi}}
\label{fig:HSM2}    
\end{figure} 

\section{Related Work}
One popular approach to discourse parsing is based on Rhetorical Structure Theory (RST)\cite{mann1988rhetorical} framework, which analyzes a document or passage by organizing its structure hierarchically. Most of the research in the direction of RST-style discourse parsing focuses on discourse segmentation and tree building. Another discourse processing framework, Segmented Discourse Representation Theory (SDRT)\cite{lascarides2007segmented}, was developed for representing and interpreting sentence meaning. SDRT emphasizes on constructing meaning across connected pieces of text or conversation using Discourse Representation Structures (DRSs) to represent meaning. DRSs are formal representations that capture the entities, conditions, and relationships introduced in a discourse and evolve as new information becomes available to the discourse. Both RST and SDRT segment the text into spans such that they cover the entire document and are non-overlapping. Therefore discourse unit segmentation task in both aims to identify the starting point of each discourse unit. 

In addition to these frameworks, the Penn Discourse TreeBank (PDTB)\cite{prasad2007penn} offers another popular framework that generates a flat discourse structure on the input document or passage. Identifying EDUs in PDTB-style discourse parsing emphasizes in identifying the spans of discourse connectives that signal a discourse relation.

Over the years, various methods have been developed to perform discourse unit/EDU segmentation to perform discourse parsing based on RST, SDRT and PDTB frameworks, but machine learning techniques specifically the use of sequential neural networks have revolutionized the field. For example, Li et al\cite{li2018segbot} proposed a model to select text boundaries using a bidirectional recurrent neural network along with a pointer network while Wang et al.\cite{wang2018toward} suggested an end-to-end neural segmenter using a BiLSTM-CRF framework. More recently, transformer-based approaches for document and discourse level segmentation were explored by Lukasik et al.\cite{lukasik2020text} and Bakshi et al. \cite{bakshi2021transformer}.
 
Various tasks in NLP perform discourse processing that focuses on spoken text. Models designed for passages or document-style discourse parsing may not be appropriate for dialogue text or spoken language. Most research in this field focuses on performing discourse parsing on dialogue systems. Popular datasets for multiparty dialogue discourse parsing include STAC \cite{asher2016discourse} and Molweni \cite{li2020molweni}. STAC corpus contains data from text-based conversations between players in online strategy games, annotated for discourse structure, including relations like contrast, elaboration, or cause-effect. The Molweni dataset also includes discourse structure annotations, labeling the relationships between different turns in the conversation, such as elaborations, questions, answers, clarifications, or follow-ups. Discourse processing using STAC or Molweni dataset centers around detecting discourse dependency links and classifying discourse relations.

Early work on the discourse parsing model for multi-party dialogue includes shallow models based on max entropy using hand-crafted features\cite{afantenos2015discourse}, and an integer linear programming-based model\cite{perret2016integer}. Shi and Huang\cite{Shi2018ADS} proposed a deep sequential model for discourse parsing that constructs a discourse structure by predicting dependency relations. Majumder et al\cite{Majumder2020InterviewLM}introduced a probabilistic framework to link dialog with facts and a generative model to perform classification on a corpus developed using media dialog. 

In recent years, much of the work in numerous NLP tasks has revolved around modeling individual sentences rather than discourse structure, as analyzing discourse structure requires an annotated dataset to examine a sentence's meaning. Large Language Models(LLMs) have shown significant improvements in this area. LLMs are trained on huge amounts of data, enabling them to understand the underlying semantics of sentences and documents, thus improving accuracy in various NLP tasks. Experimental analysis\cite{Gu2020DialogBERTDR}\cite{Santra2021RepresentationLF} supports that a thorough analysis of discourse is not necessary for creating an understanding of a sentence. 




Despite significant progress, there are several open challenges that need to be addressed to advance this field. The first challenge is building datasets. Most methods designed for discourse/EDU segmentation are data-driven and require large-scale, high-quality corpora containing EDUs, relations, and dialogue to train powerful models.  
Besides scale, ensuring annotation consistency in large datasets is also difficult. The literature indicates that discourse processing relies heavily on a meticulously developed dataset, but annotating discourse structure and relations poses a difficulty in maintaining consistency, especially when multiple or less trained annotators contribute in the development of the dataset. 

The use of sequential deep neural architectures imposes the second challenge. These black-box models makes it difficult to identify the features responsible for their high accuracy rates. Further, in NLP applications, EDUs are established using several deep neural architectures, tailored to embed knowledge relevant to the specific task/problem being addressed. This field lacks a unified structure for constructing and evaluating such EDUs. These challenges limit the applicability of discourse unit/ EDU segmentation, particularly for domain-specific datasets or datasets in other languages, as the available data is often insufficient to train deep learning-based models.
 
The main contribution of this work is to propose an EDU segmentation framework that is adaptable and language-independent. Unlike the traditional classification or prediction-based segmentation models, the proposed framework does not require the development of extensively annotated datasets. This study establishes the theoretical foundation for demonstrating that CCA can perform EDU segmentation task and the two-view setting, where we apply this interpretation of CCA, is one where obtaining unlabeled samples is easy whereas labeled samples are scarce.




\section{Canonical Correlation Analysis}
Canonical Correlation Analysis (CCA)\cite{Hotelling1936RelationsBT} is a multivariate statistical technique used to explore the relationships between two sets of variables. The primary goal of CCA is to find linear combinations of variables in each set, known as canonical variates, such that the correlation between the sets of canonical variates is maximized. In other words, CCA identifies the most highly correlated pairs, essentially a linear combination of variables in two sets.

CCA requires two sets of variables, one set for each domain. For example, you might have one set of variables related to physical activities and another set related to an individual's health. It then generates a pair of canonical variates, a linear combination of variables from each set of physical activity and individual health, that are maximally correlated. The number of canonical variates is equal to the minimum number of variables in the two sets. The variates are determined by learning of the Canonical Correlations. These represent the correlation coefficients between the sets of canonical variates. The goal is to maximize these correlations.

Consider two sets of random variables a and b, where a has m variables (features) and b has n variables. The goal of CCA is to find linear combinations of a and b such that the correlation between these linear combinations is maximized. Assuming that a and b are jointly Gaussian (multivariate normal) random variables
\begin{equation} \label{eq:1}
 \begin{aligned}
 (a,b) \sim \mathcal{N}(\mu\textsubscript{a}, C\textsubscript{aa}) \otimes \mathcal{N}(\mu\textsubscript{b}, C\textsubscript{bb}) \\
 \end{aligned}
 \end{equation}
Where $\mathcal{N}(\mu\textsubscript{a}, C\textsubscript{aa})$ is the Gaussian distribution of a with mean $\mu\textsubscript{a}$ and covariance matrix $C\textsubscript{aa}$. $\mathcal{N}(\mu\textsubscript{b}, C\textsubscript{bb})$ is the Gaussian distribution of b with mean $\mu\textsubscript{b}$ and covariance matrix $C\textsubscript{bb}$. $\otimes$ denotes the Kronecker product. CCA finds vectors U\textsubscript{1} and U\textsubscript{2} such that the canonical correlation, denoted by $\rho$, is maximized. The canonical variables are given by 
\begin{equation} \label{eq:2}
 \begin{aligned}
 \lambda\textsubscript{a} = a U\textsubscript{1}  \\
 \lambda\textsubscript{b} = b U\textsubscript{2} \\
 \end{aligned}
 \end{equation} where U\textsubscript{1} and U\textsubscript{2} are the canonical weight vectors. CCA seeks U\textsubscript{1} and U\textsubscript{2} to maximize the correlation $\rho$ between $\lambda$\textsubscript{a} and $\lambda$\textsubscript{b}

 \begin{equation} \label{eq:3}
 \begin{aligned}
 \rho = Corr(\lambda\textsubscript{a}, \lambda\textsubscript{b}) = \frac{Cov(\lambda\textsubscript{a}, \lambda\textsubscript{b})}{\sqrt{Var(\lambda\textsubscript{a}) \cdot Var(\lambda\textsubscript{b})}} \\
 \end{aligned}
 \end{equation}

 The optimization problem in CCA is framed around maximizing the canonical correlation $\rho$, which is formulated as an eigenvalue problem involving the cross-covariance matrices $C\textsubscript{ab}$ and $C\textsubscript{ba}$

\begin{equation} \label{eq:4}
\begin{aligned}
 \underset{U\textsubscript{1}, U\textsubscript{2}}{\text{max $\rho$}} = \underset{U\textsubscript{1}, U\textsubscript{2}}{\text{max}} \frac{U\textsubscript{1}^T C\textsubscript{ab} U\textsubscript{2}} { \sqrt{U\textsubscript{1}^T C\textsubscript{aa} U\textsubscript{1} \cdot U\textsubscript{2}^T C\textsubscript{bb} U\textsubscript{2}}}\\
\end{aligned}
\end{equation}

The outcome of the optimization is a diagonal matrix containing canonical correlations $\rho$ = diag([p\textsubscript{0}, ..., p\textsubscript{dim}]), where dim = min(m,n). The projections are maximally correlated if i = j, with correlation coefficient p\textsubscript{i}, and uncorrelated otherwise. 

\section{Suitability of CCA in modeling EDUs}
CCA is a powerful tool for identifying and understanding relationships between sets of variables. It provides a way to extract meaningful patterns and associations from multivariate data. 

Recently, Bach and Jordan\cite{article} demonstrated that when considering two random vectors that are independent conditional on some hidden state L, as shown in figure \ref{fig:HSM1}, CCA constitutes as an effective tool in generating an interpretation of L. The probabilistic interpretation of CCA assumes that the data follows a joint Gaussian distribution, and the optimization problem aims to identify the most highly correlated pairs, essentially a linear combination of variables within each vector. Canonical correlation measures the strength of the linear association between the canonical variables obtained from a and b.

Similarly, Foster et al \cite{foster2008multi} capitalized on CCA, while working on input data under a multi-view setting, to lower the dimensionality of the input vector. Motivated by the work presented by Bach and Jordan\cite{article}, Foster et al\cite{foster2008multi} approached the model in figure \ref{fig:HSM1} from a two-view perspective.  Foster et al\cite{foster2008multi} replaced the two random variables with two views a\textsubscript{1} and a\textsubscript{2} of input data and interpreted the probabilistic model as a conditional independence assumption, illustrated in the figure \ref{fig:HSM2}, which implies that
\begin{align}
    Prob(a\textsubscript{1}, a\textsubscript{2}\text{\textbar} L) =Prob (a\textsubscript{1}\text{\textbar} L) Prob (a\textsubscript{2}\text{\textbar} L)
\end{align}

\begin{figure}
\centering
\includegraphics{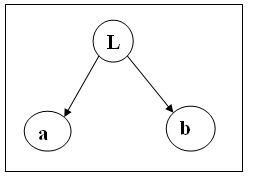}
\caption{Graphical model for Latent Interpretation Model adapted from \protect\cite{article}}
\label{fig:HSM1}    
\end{figure} 

\subsection{Theoretical Foundation}
We first discuss the lemma that highlights the role of CCA in determining the hidden state. Given two random variables a and b, Bach and Jordan\cite{article} proposed an interpretation of hidden state using CCA. 

 \begin{lemma}\label{lemma1}
We consider the model given in figure \ref{fig:HSM1} where a and b are two random variables of dimensions m and n respectively defined by \ref{eq:5}
\begin{equation} \label{eq:5}
 \begin{aligned}
L \sim \mathcal{N}(0,I\textsubscript{dim}) \\
a \text{\textbar} L \sim \mathcal{N}(W\textsubscript{a} L + \mu\textsubscript{a},\Psi\textsubscript{a}) \\
b \text{\textbar} L \sim \mathcal{N}(W\textsubscript{b} L + \mu\textsubscript{b},\Psi\textsubscript{b})
 \end{aligned}
 \end{equation}
 Here L $\in\mathbb{R}$\textsuperscript{(dim)} is the shared hidden state, $I$ is an Identity matrix and $dim$ = min(m, n) is the dimension of the space onto which the views are projected. The maximum likelihood estimates of the model parameters W\textsubscript{a} $ \in\mathbb{R}$\textsuperscript{(m $\times$ dim)}, W\textsubscript{b} $ \in\mathbb{R}$\textsuperscript{(n $\times$ dim)}, $\mu$\textsubscript{a}, $\mu$\textsubscript{b}, $\Psi$\textsubscript{a} and $\Psi$\textsubscript{b} are derived as first dim canonical directions as \ref{eq:6}
\begin{equation}  \label{eq:6}
 \begin{aligned}
 \Hat{W\textsubscript{a}} = C\textsubscript{aa} U\textsubscript{a} M\textsubscript{a} \\
 \Hat{W\textsubscript{b}} = C\textsubscript{bb} U\textsubscript{b} M\textsubscript{b} \\
 \Hat{\Psi\textsubscript{a}} = C\textsubscript{aa} - \Hat{W\textsubscript{a}} \Hat{W\textsubscript{a}}\textsuperscript{T}\\
\Hat{\Psi\textsubscript{b}} = C\textsubscript{bb} - \Hat{W\textsubscript{b}} \Hat{W\textsubscript{b}}\textsuperscript{T}\\
\Hat{\mu\textsubscript{a}}= \mu\textsubscript{a} \\
\Hat{\mu\textsubscript{b}}= \mu\textsubscript{b} 
 \end{aligned}
 \end{equation}
 where C is the covariance matrix, the arbitrary matrices M\textsubscript{a} and M\textsubscript{b}, $ \in\mathbb{R}$\textsuperscript{(dim $\times$ dim)} such that M\textsubscript{a}M\textsubscript{b}\textsuperscript{T} = $\rho$\textsubscript{dim} and the spectral norms be smaller than one. The j\textsuperscript{th} column of matrices U\textsubscript{a} and U\textsubscript{b} are equal to the j\textsuperscript{th} eigenvector and the first dim canonical correlations are given by a diagonal matrix $\rho$\textsubscript{dim}.   
 \end{lemma}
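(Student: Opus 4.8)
The plan is to reduce the likelihood maximization to a property of the \emph{marginal} distribution of the pair $(a,b)$, and then to match that marginal against the sample covariance using the defining equations of CCA. First I would marginalize the latent variable $L$ out of the model \ref{eq:5}. Since $L$ is Gaussian and each view is Gaussian with mean linear in $L$, the concatenated vector $(a,b)$ is jointly Gaussian with mean $(\mu_a,\mu_b)$ and covariance
\begin{equation}
\Sigma =
\begin{pmatrix}
W_a W_a^{T} + \Psi_a & W_a W_b^{T} \\
W_b W_a^{T} & W_b W_b^{T} + \Psi_b
\end{pmatrix} .
\end{equation}
The key consequence is that the log-likelihood of an i.i.d.\ sample depends on the parameters only through $(\mu_a,\mu_b)$ and $\Sigma$, so the estimation problem becomes a constrained Gaussian covariance-fitting problem.

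Next I would invoke the standard fact that an unconstrained Gaussian log-likelihood is maximized at the sample mean and sample covariance. This immediately yields $\hat{\mu}_a$ and $\hat{\mu}_b$ as the empirical means, which are the last two lines of \ref{eq:6}, and it forces the structured $\Sigma$ to fit the sample covariance with blocks $C_{aa}$, $C_{ab}$, $C_{bb}$ as tightly as the factor structure permits. Because $\dim=\min(m,n)$, the two diagonal blocks can be matched exactly by setting $\Psi_a = C_{aa}-W_aW_a^{T}$ and $\Psi_b = C_{bb}-W_bW_b^{T}$, which are precisely the third and fourth lines of \ref{eq:6}, provided these residuals remain positive semidefinite. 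The only genuinely binding constraint that then remains is the off-diagonal identity $W_aW_b^{T}=C_{ab}$.

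I would then bring in the CCA normalization: the canonical directions $U_a,U_b$ can be scaled so that $U_a^{T} C_{aa} U_a = I$, $U_b^{T} C_{bb} U_b = I$, and $U_a^{T} C_{ab} U_b = \rho_{\dim}$, the diagonal matrix of canonical correlations, whence $C_{ab} = C_{aa}U_a\rho_{\dim}U_b^{T}C_{bb}$. Substituting the proposed estimates $\hat{W}_a = C_{aa}U_aM_a$ and $\hat{W}_b = C_{bb}U_bM_b$ and using $M_aM_b^{T}=\rho_{\dim}$, a direct computation reduces $\hat{W}_a \hat{W}_b^{T}$ to $C_{aa}U_a\rho_{\dim}U_b^{T}C_{bb}=C_{ab}$, confirming the off-diagonal constraint. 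The spectral-norm bound on $M_a,M_b$ is exactly what guarantees that $\hat{\Psi}_a$ and $\hat{\Psi}_b$ stay positive semidefinite, so the candidate is a feasible point of the model.

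The hard part will be establishing that this feasible point is genuinely the maximizer and characterizing the full family of maximizers. Since $\dim=\min(m,n)$, I expect to argue that the proposed parameters let the structured $\Sigma$ reproduce the \emph{entire} sample covariance, so the likelihood attains its global unconstrained value and the point is automatically the MLE; the substance lies in verifying the off-diagonal fit through the canonical eigenstructure and in verifying feasibility rather than in a separate first-order optimality calculation. Finally, the rotational freedom encoded by $M_aM_b^{T}=\rho_{\dim}$ must be made explicit, since it is the source of the non-identifiability that leaves the estimates in \ref{eq:6} determined only up to the arbitrary matrices $M_a,M_b$, an instance of the ambiguity inherent in latent-variable factorizations and the reason the solution cannot be pinned down more tightly.
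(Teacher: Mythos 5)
Your proposal is correct in substance but takes a genuinely different route from the paper. The paper performs the same marginalization of $L$ to obtain the block covariance, maximizes the Gaussian log-likelihood over $\mu$ at the sample mean, and then defers the entire covariance optimization to the proof of \cite{article}, i.e.\ to a stationary-point/eigenstructure analysis of the likelihood that works for any latent dimension $d \le \min(m,n)$ and identifies which canonical directions and correlations the maximizer must select. You instead exploit the lemma's hypothesis $\mathrm{dim} = \min(m,n)$ to sidestep that analysis entirely: the candidate parameters reproduce the sample covariance exactly, since $\hat W_a \hat W_b^{T} = C_{aa} U_a \rho_{\mathrm{dim}} U_b^{T} C_{bb} = C_{ab}$ via the thin SVD of $C_{aa}^{-1/2} C_{ab} C_{bb}^{-1/2}$, and $\hat\Psi_a, \hat\Psi_b$ remain positive semidefinite thanks to the spectral-norm bound; hence the structured likelihood attains the unconstrained Gaussian maximum and the candidate is automatically a global MLE, with no variational argument needed. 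What your route buys is an elementary, self-contained proof; what it gives up is generality --- it collapses whenever the latent dimension is strictly below $\min(m,n)$, which is precisely the regime the borrowed Bach--Jordan argument is built to handle. Two details to nail down if you write this out in full: first, the step ``whence $C_{ab} = C_{aa} U_a \rho_{\mathrm{dim}} U_b^{T} C_{bb}$'' does not follow from the normalization identities $U_a^{T} C_{aa} U_a = I$, $U_b^{T} C_{bb} U_b = I$, $U_a^{T} C_{ab} U_b = \rho_{\mathrm{dim}}$ alone, but from completeness of the canonical system at full dimension, so it should be proved explicitly through the whitened SVD; second, your feasibility argument shows the displayed parameters \emph{are} maximizers, whereas the converse claim implicit in the lemma --- that every MLE belongs to the family parameterized by $M_a M_b^{T} = \rho_{\mathrm{dim}}$ --- still requires characterizing all exact factorizations of $C_{ab}$ with positive semidefinite residuals, a step the paper inherits from the cited reference rather than proving itself.
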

\begin{proof}
\ref{app1}    
\end{proof}
 The lemma \ref{lemma1} is based on a probabilistic model which implies that a and b are independent variables conditional on some hidden state. Assuming that the hidden space follows Gaussian distribution, the lemma \ref{lemma1} supports that CCA unearths the shared hidden state on a lower dimensional space.
 
 The two-view framework and the interpretation, proposed by Bach and Jordan\cite{article}, attracted significant attention. Dhillon et al\cite{dhillon2015eigenwords} utilized this setting to generate semantic word embeddings by defining one view as a specified number of words surrounding the target word, while considering the target word itself as the other view.
Foster et al.\cite{foster2008multi} on the other hand, utilized this setting to analyze the multi-view regression problem. 
 Referencing the two random variables a and b in the graphical model representation given by figure \ref{fig:HSM1} as two views of input data and the probabilistic model as an assumption, called as conditional independence, Foster et al.\cite{foster2008multi} showed that CCA can lower the dimensionality of the input data (by generating a latent state) without losing its predictive power.

Let us define the operation of CCA as \ref{eq:7} 
\begin{equation}  \label{eq:7}
    (\lambda\textsubscript{a\textsubscript{1}},  \lambda\textsubscript{a\textsubscript{2}}) = CCA(a\textsubscript{1}, a\textsubscript{2})
\end{equation}
Here a\textsubscript{1} and a\textsubscript{2} are two-views of the input data a. $\lambda$\textsubscript{a\textsubscript{1}} and $  \lambda$\textsubscript{a\textsubscript{2}} are the projection of a\textsubscript{1} and a\textsubscript{2} respectively outputted by CCA onto a dim = min(m,n) dimensional subspace. We now discuss the lemma that shows that the projections have a reduced dimensionality still their predictive power of the target variable remains unaffected.
\begin{lemma}\label{lemma2}
    Considering that the conditional independence holds for the model given in figure \ref{fig:HSM2} and that the dimension of L is dim. Then $\lambda$ represents the CCA subspace of dimension dim and
    \begin{enumerate}
        \item the best linear estimator of target variable Z with a\textsubscript{1} as well as its projection $\lambda$\textsubscript{a\textsubscript{1}} are equal.

        \item the best linear estimator of target variable Z with a\textsubscript{2}  as well as its projection $\lambda$\textsubscript{a\textsubscript{2}} are equal.
    \end{enumerate}
\end{lemma}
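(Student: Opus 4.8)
The plan is to reduce the claim to a statement about where the ordinary least-squares regression direction lives, and then to show that the conditional independence assumption forces that direction into the CCA subspace, so that replacing $a_1$ by its projection $\lambda_{a_1}$ discards nothing useful for predicting $Z$. After centering every variable, the best linear estimator of $Z$ from $a_1$ is $\hat{Z}(a_1) = C_{Za_1} C_{a_1 a_1}^{-1} a_1$ from the normal equations, with regression direction $\beta = C_{a_1 a_1}^{-1} C_{a_1 Z}$, while the best linear estimator from $\lambda_{a_1} = a_1 U_{a_1}$ (for the canonical weight matrix $U_{a_1}$ of Lemma \ref{lemma1}) is its analogue built from $C_{Z\lambda_{a_1}}$ and $C_{\lambda_{a_1}\lambda_{a_1}}$. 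Since $\mathrm{span}(\lambda_{a_1}) \subseteq \mathrm{span}(a_1)$, the two estimators coincide if and only if $\hat{Z}(a_1)$ already lies in $\mathrm{span}(\lambda_{a_1})$, which is equivalent to the algebraic condition $\beta \in \operatorname{range}(U_{a_1})$. Establishing this containment is the whole content of the lemma.

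First I would use the generative model of \ref{eq:5} together with the conditional independence of a view and the target given $L$ to factor the relevant cross-covariances. Because $E[a_1 \mid L] = W_{a_1} L + \mu_{a_1}$ and $Z \perp a_1 \mid L$, conditioning on $L$ and applying the tower property yields $C_{a_1 Z} = W_{a_1} C_{LZ}$, so $C_{a_1 Z}$ sits in the column space of $W_{a_1}$; the same computation applied to the two views gives the structural identity $C_{a_1 a_2} = W_{a_1} W_{a_2}^{\mathrm{T}}$.

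Next I would invoke Lemma \ref{lemma1} to pin down the CCA subspace. The canonical weights $U_{a_1}$ are the leading eigenvectors of $C_{a_1 a_1}^{-1} C_{a_1 a_2} C_{a_2 a_2}^{-1} C_{a_2 a_1}$; substituting $C_{a_1 a_2} = W_{a_1} W_{a_2}^{\mathrm{T}}$ shows this matrix has its range inside $C_{a_1 a_1}^{-1} \operatorname{range}(W_{a_1})$, and when $W_{a_1}$ has full rank dim the eigenvectors with nonzero canonical correlation span that subspace exactly, so $\operatorname{range}(U_{a_1}) = C_{a_1 a_1}^{-1} \operatorname{range}(W_{a_1})$. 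Combining this with $C_{a_1 Z} = W_{a_1} C_{LZ}$ gives $\beta = C_{a_1 a_1}^{-1} W_{a_1} C_{LZ} \in C_{a_1 a_1}^{-1} \operatorname{range}(W_{a_1}) = \operatorname{range}(U_{a_1})$, which is exactly the containment demanded in the first paragraph, so $\hat{Z}(a_1) = \hat{Z}(\lambda_{a_1})$. The second assertion follows by repeating the argument verbatim with the roles of $a_1$ and $a_2$ exchanged.

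The step I expect to be the main obstacle is the identification of the CCA subspace in the whitened coordinates: one must verify that $\operatorname{range}(U_{a_1})$ equals, and not merely contains, $C_{a_1 a_1}^{-1}\operatorname{range}(W_{a_1})$, which requires $W_{a_1}$ (equivalently the diagonal $\rho_{\mathrm{dim}}$) to have full rank dim so that the leading dim canonical directions isolate precisely the signal subspace rather than a proper sub-part of it. A related point worth stating explicitly is the assumption that the target $Z$ is conditionally independent of both views given $L$: the factorization $C_{a_1 Z} = W_{a_1} C_{LZ}$ relies on it, and without it the regression direction $\beta$ need not fall inside the CCA subspace at all.
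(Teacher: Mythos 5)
Your proof is correct, and it reaches the same destination as the paper's own argument by the same underlying strategy --- conditional independence forces the cross-covariance (and the view--target covariance) to factor through the dim-dimensional latent state, so the regression direction cannot leave the CCA subspace --- but the execution is genuinely different and, in fact, more careful than the paper's. The paper's proof (Appendix B) works implicitly in whitened coordinates: it writes the optimal predictor as $\beta = C_{1L} = C_{12}(C_{L2})^{-1}$ with no $C_{11}^{-1}$ factor, identifies the CCA basis with the singular vectors of $C_{12}$ (both of which presuppose $C_{11}=C_{22}=I$ without saying so), and then obtains the invariance $\beta a_1 = \beta \lambda_{a_1}$ from the fact that the diagonal $\rho$ annihilates the components of $a_1$ outside the retained canonical directions; notably, the target $Z$ never actually appears in that proof --- it silently predicts the latent state $L$ and leaves the passage from $L$ to $Z$ to the reader, deferring the rest to \citet{foster2008multi}. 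Your route instead keeps general (unwhitened) covariances, uses the normal equations $\beta = C_{a_1a_1}^{-1}C_{a_1Z}$ and the generalized-eigenvalue characterization of the canonical weights, and closes the argument by the range identification $\operatorname{range}(U_{a_1}) = C_{a_1a_1}^{-1}\operatorname{range}(W_{a_1})$. What this buys is significant: it makes explicit the two hypotheses that the paper's statement and proof both leave tacit, namely that $W_{a_1}$ (equivalently $\rho_{\mathrm{dim}}$) must have full rank dim so that the top canonical directions capture the whole signal subspace, and --- more importantly --- that $Z$ must be conditionally uncorrelated with each view given $L$, without which the lemma is simply false (a component of $Z$ correlated with the part of $a_1$ orthogonal to the latent signal would be destroyed by the projection). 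The trade-off is that the paper's whitened SVD computation is shorter and tracks \citet{foster2008multi} line by line, whereas yours is self-contained and survives scrutiny at the points where the paper's proof is only a sketch.
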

\begin{proof}
\ref{app2}    
\end{proof}

It is evident from lemma \ref{lemma2} that the representations generated by CCA for each sentence are equivalent to the input sentence-pair. 

Although lemma \ref{lemma1} provides a probabilistic interpretation of CCA for determining a hidden state that consists of low dimensional real vectors, it is lemma \ref{lemma2} that highlights the role of these representations in replacing the input data thus justifying its designation as an elementary discourse unit.

\section{Modeling EDUs Segmentation using CCA-based two-view learning framework}
The goal of this study is to estimate a set of vectors for each sentence that captures the shared hidden intent in that sentence in the form of latent representations of the correlation between the word of the sentence and the word of the sentence in its immediate context.

More formally, assume a document or a conversation consisting of n sentences {s\textsubscript{1}, s\textsubscript{2}, ..., s\textsubscript{n}}. Define each sentence as a list of the embedding, where a k-dimensional vector is determined to represent each word using Glove \cite{pennington2014glove}. s\textsubscript{i} = (word\textsubscript{i1}, word\textsubscript{i2}, ..., word\textsubscript{im}), i = 1, 2,..., m, where each element, word\textsubscript{i1}, is the k-dimensional embedding equivalent of its corresponding word. The model projects the sentence onto a hidden space to generate latent representations of dimension (dim,k) that captures the hidden intent by applying CCA on the multivariate random variables s\textsubscript{i} and s\textsubscript{i} of dimension (m,k) and (n,k) respectively. Here dim = min(m , n). Therefore, each latent representation generated acts as a EDU and can be mapped to a word using Glove. 
The EDU segmentation model is given by algorithm \ref{alg:edu}
    \begin{algorithm}
        \caption{EDU Segmentation using CCA based framenwork}\label{alg:edu}
        \begin{algorithmic}[1]
        \While{$N \neq size$}
        \State num\textunderscore of\textunderscore EDUs = min(len(s\textsubscript{1}), len(s\textsubscript{2}))
        \State cca = CCA(n\textunderscore components=num\textunderscore of\textunderscore EDUs)
        \State Fit(s\textsubscript{1},s\textsubscript{2}) 
        \State EDU\textsubscript{1}, EDU\textsubscript{2} = Transform(s\textsubscript{1}, s\textsubscript{2})
        \State N $gets$ N + 1
\EndWhile
        \end{algorithmic}
        \end{algorithm}

CCA is a function defined in SKLearn, an open-source Python library. Method Fit(A, B) is tasked to fit the model to input data. On the other hand, Transform(A,[B, copy]) transforms the input A and B by maximizing the correlation amongst them and returns the latent variable pairs (via projections) EDU\textsubscript{A}, EDU\textsubscript{B} respectively. 

\section{Evaluating the EDUs} 
Semantic Textual Similarity (STS) refers to the task of determining the degree of similarity between two pieces of text based on their underlying meaning. The fundamental concept in the design of STS models is to extract semantic information, understand the relationship between them and output a score based on the nature of the relationship. This core concept behind STS serves as a cornerstone for various tasks in NLP, such as document summarization, textual entailment and many more.

To strengthen and validate the learned EDUs under a two-view setting in textual data, experiments were performed on three textual similarity tasks from SemEval Semantic Textual Similarity (STS) challenges (2012-2017). Although STS tasks are considered standardized datasets for textual similarity, the model was also tested on the Automatic Short Answer Grading(ASAG) task. ASAG is used to assess and grade short answers submitted by students. These systems analyze the content, structure, and relevance of the answers and provide a score. The ASAG task resembles the semantic textual similarity task as both tasks output a degree of similarity between two pieces of text. The ASAG task was chosen as it typically involves a small and domain-specific dataset that is not enough to provide training in deep learning based models. 
\subsection{Datasets}

\subsubsection{STS Dataset} The dataset considered for experimenting was SemEval-2017 Task 1\cite{cer-etal-2017-semeval} also known as STSB. STSB dataset consists of the data published in the SemSval English STS shared tasks (2012 - 2017). The dataset contains 8628 sentence-pairs. Each sentence pair in these datasets is accompanied by a similarity score that ranges from 0(highly dissimilar) to - 5(highly similar) as shown in Table 1.
\begin{table}
\caption{A Sample demonstrating the sentence pair STS-B dataset.}\label{tab1}
\begin{tabular}{|p{2.5cm}|p{4.5cm}|p{4.5cm}|}
\hline
 &  {\bfseries Example - 1} & {\bfseries Example - 2} \\
\hline
{\bfseries Sentence 1} &  A plane is taking off.
 & Two boys are driving.\\ \hline
{\bfseries Sentence 2} &  An air plane is taking off. & Two bays are dancing.\\ \hline
{\bfseries Similarity Score} & 5 (The two sentences mean the same thing hence are completely equivalent)  & 0.6 (The two sentences may be around the same topic but are not equivalent)\\ 
\hline
\end{tabular}
\end{table}

\subsubsection{Mohler Dataset} The Mohler dataset\cite{Mohler2009TexttoTextSS} refers to a collection of short-answer text data that is often used in natural language processing (NLP) research and machine learning tasks, particularly in the context of ASAG. The dataset contains 80 questions along with desired answer with approximately 24 to 30 student submissions for each question. Thus containing 2273 pairs of the desired answer and student's submission. Each student submission is graded by two assigned teachers and an average grade is also attached with each pair as illustrated in table \ref{tabmohler}.

\begin{sidewaystable}
\caption{Samples demonstrating the questions, reference answers, student answers, and the grades awarded from the Mohler's dataset.}\label{tabmohler}

\centering
\begin{tabular}{|p{0.5cm}|p{3.5cm}|p{3.5cm}|p{7cm}|p{1.5cm}|p{1.5cm}|p{1cm}|}
\hline
 {\bfseries Id} & {\bfseries Question} & {\bfseries Desired Answer} &{\bfseries Student Answer} & {\bfseries Teacher 1 Grade} & {\bfseries Teacher 2 Grade} & {\bfseries Average} \\ \hline
1.1 & What is the role of a prototype program in problem solving? &  To simulate the behaviour of portions of the desired software product. & High risk problems are address in the prototype program to make sure that the program is feasible. & 4 & 3 & 3.5\\ \hline

1.1 & What is the role of a prototype program in problem solving? &  To simulate the behaviour of portions of the desired software product. & To simulate portions of the desired final product with a quick and easy program that does a small specific job. It is a way to help see what the problem is and how you may solve it in the final project. & 5 & 5 & 5\\ \hline

1.1 & What is the role of a prototype program in problem solving? &  To simulate the behaviour of portions of the desired software product. & High risk problems are address in the prototype program to make sure that the program is feasible.  A prototype may also be used to show a company that the software can be possibly programmed. & 4 & 3 & 3.5\\ \hline


\end{tabular}
\end{sidewaystable}

\subsection{Modeling Textual Similarity}
Here's how this study exploits the EDUs generated using CCA in developing an application that performs the similarity task. In this setting, the given two pieces of text a\textsubscript{1} and a\textsubscript{2} are perceived as two views of input a = (a\textsubscript{1}, a\textsubscript{2}) and the algorithm to compute the semantic similarity score s given by algorithm \ref{alg:sts}
    \begin{algorithm}
        \caption{Modeling Semantic Textual Similarity Using EDUs}\label{alg:sts}
        \begin{algorithmic}[1]
        \State Using a = (a\textsubscript{1}, a\textsubscript{2}), perform CCA.
        \State Construct the projections EDU\textsubscript{a\textsubscript{1}} and EDU\textsubscript{a\textsubscript{2}} of size dim.
        \State Design a distance function f(EDU\textsubscript{a\textsubscript{1}}, EDU\textsubscript{a\textsubscript{2}}) to formulate a similarity score.
        \end{algorithmic}
        \end{algorithm}

Given a sentence pair (a\textsubscript{i}, a\textsubscript{j}), CCA outputs projections that are essentially a linear transformation of a\textsubscript{i} and a\textsubscript{j}. This function returns the canonical pairs of direction, in other words, a projection matrix, for each a\textsubscript{1} and a\textsubscript{2} respectively. This outputted projection matrix is then used to transform a\textsubscript{1} and a\textsubscript{2} to develop EDU\textsubscript{a\textsubscript{1}} and EDU\textsubscript{a\textsubscript{2}} respectively.

Here the number of projections determined by CCA is given as min(m, n) where m and n are the length of a\textsubscript{i} and a\textsubscript{j}. Table~\ref{tab2} demonstrates canonical variables/ projections determined by CCA on a few samples of desired answer-student submission pair from the Mohler dataset. The EDUs in the table are listed as (w\textsubscript{i1} , w\textsubscript{i2} , ... , w\textsubscript{in} ) , i = 1 , 2 , . . . , n, where each element is the word embedding counterpart of the respective projection identified by CCA.


\begin{table}
\centering
\caption{Projections and Similarity score obtained from the proposed model on a few samples of desired answer-student submission pair.}
\label{tab2}
\begin{tabular}{|p{5cm}|p{3cm}|p{1.5cm}|p{1.5cm}|}
\hline
{\bfseries Sentence Pair} & {\bfseries Projection} & {\bfseries Average Score} & {\bfseries Proposed Model} \\ \hline
 {\bfseries Desired Answer} 'To simulate the behaviour of portions of the desired software product.' & ['software', 'behaviour', 'product', 'portions', 'desired'] & 2 & 1.9 
 \\

 {\bfseries Student Answer} 'To find problem and errors in a program before it is finalized.'& ['program', 'problem', 'find', 'finalized', 'errors'] & &\\
\hline

 {\bfseries Desired Answer} At the main function.	&	['main', 'function'] &	5 &	3.78
 \\

{\bfseries Student Answer} The main method. & ['main', 'method'] & &	 

  \\\hline
  {\bfseries Desired Answer}  A location in memory that can store a value. &	['store', 'value', 'memory', 'location'] & 3.5 & 4.3 \\
  {\bfseries Student Answer} An object with a location in memory where value can be stored & ['value', 'stored', 'location', 'memory'] & &\\ \hline
 {\bfseries Desired Answer} The block inside a do...while statement will execute at least once. & ['statement', 'least', 'execute', 'inside', 'block'] & 5 & 2.69 \\
 {\bfseries Student Answer} a while statement will only process if the statement is met, while a do...while will always process once, then only continue if the statement is met. & ['statement', 'always', 'process', 'always', 'process'] & & \\ \hline 

\end{tabular}
\end{table}

\subsection{Formulating Similarity Score}
To generate a score for similarity, cosine similarity is applied between the projection pairs. The average cosine similarity between two projection pairs is then scaled to 5 to determine a final similarity score.

\subsection{Results and Analysis}


The effectiveness of a model in such a setting is computed using the Pearson correlation coefficient between the scores determined by the model and the scores provided in the dataset. The higher the Pearson’s r the better the model. Table \ref{tabstsb} shows the result of various models experimented on the Semeval-2017(STSB) dataset, and table \ref{tabmohlerpearson} on the Mohler dataset. For STS datasets, large language models significantly outperform on the standardized datasets with a Pearson score of around 0.90 but the model proposed is competitive with deep sequential models. With Mohler dataset the Pearson’s r score shows that this simple linear model significantly outperforms the grading accuracy of various non-linear complex deep learning models. 

\begin{table}
\caption{Experiment on the STS-B task.}\label{tabstsb}
\centering
\begin{tabular}{|p{6cm}|p{1.5cm}|}
\hline
{\bfseries Model} &	 {\bfseries Pearson's r Score} \\
\hline

BiLSTM\cite{wang-etal-2018-glue} &	0.660\\ \hline
+CoVe\cite{mccann2017learned} &	0.672\\ \hline
BiLSTM\cite{wang-etal-2018-glue}	& 0.703\\ \hline
+Attn, ELMo\cite{wang-etal-2018-glue} &	0.742\\ \hline
Infersent\cite{wang-etal-2018-glue} &	0.759\\ \hline
GenSen\cite{wang-etal-2018-glue} &	0.793\\ \hline
Proposed Model &	0.797 \\  \hline
Bert base\cite{dasgupta-etal-2023-cost} & 0.898  \\  \hline
DistilBERT\cite{dasgupta-etal-2023-cost}(layers=6 x hidden state=768) & 0.810  \\  \hline
TinyBERT\cite{dasgupta-etal-2023-cost}(layers=6 x hidden state=368) & 0.837  \\  \hline

\end{tabular}
\end{table}

\begin{table}
\centering
\caption{Experiment on the Mohler dataset.}
\label{tabmohlerpearson}
\begin{tabular}{|p{4cm}|p{3cm}|}
\hline
 {\bfseries Model} & {\bfseries Pearson's r Score} \\ \hline
 Bi-LSTM-Capsule \cite{Zhu2022AutomaticSG} & 0.507 \\ \hline
 Bi-LSTM + CNN \cite{Zhu2022AutomaticSG} & 0.517 \\ \hline 
 CNN \cite{Zhu2022AutomaticSG} & 0.002 \\ \hline
 Capsule \cite{Zhu2022AutomaticSG} & 0.070 \\ \hline
 Bi-LSTM \cite{Zhu2022AutomaticSG} & 0.092
\\ \hline
ELMo \cite{Gaddipati2020ComparativeEO} & 0.485 \\ \hline
GPT \cite{Gaddipati2020ComparativeEO} & 0.248 \\ \hline
BERT \cite{Gaddipati2020ComparativeEO} & 0.318 \\ \hline
GPT-2 \cite{Gaddipati2020ComparativeEO} & 0.311 \\ \hline
Proposed Model & 0.512 \\ \hline
\end{tabular}
\end{table}

\section{Conclusion}
We have presented an automated EDU segmentation framework using CCA that is simple, understandable, and easily integrated into most NLP and text-based applications. Notably, it eliminates the need for large labeled datasets, making various NLP tasks more accessible. The framework does not pose any language limitations. Since it is adaptable, numerous extensions are possible for this linear probabilistic model. While its contribution to the similarity task may seem marginal, considering that the method is linear, adaptive, and supports unlabeled datasets, it shows significant potential and opens up a wide range of applications where it can be applied. 

Deep-learning architectures are a popular choice for achieving high accuracy in most segmentation tasks, consequently increasing the need for large labeled data sets and resource-intensive training. On the other hand, the framework proposed is simple and linear, yet, it certainly stands out and shows promise compared to the state-of-the art.

\section{Future Work}
Based on the interpretation of CCA, the model proposed automatically obtains EDUs from the text, which are essentially represented by a word or a unigram. Therefore, exploring variants to discover the syntactic patterns or phrases within a sentence would be an important development. To further the discourse processing task, with this model as the foundation, it is necessary to address the following: (1) selection of the EDUs, and (2) the aggregation of selected EDUs into a structure.

\section*{Statements and Declarations}
\begin{itemize}
\item Funding: This research received no specific grant from any funding agency in the public, commercial, or not-for-profit sectors.
\item Conflict of interest/Competing interests: The authors declare that they have no financial or non-financial competing interests.
\item Ethics approval and consent to participate: Not Applicable

\item Data availability: The data used to support the findings of this study is available publicly.

\item Author contribution: All authors contributed equally to this work. Krishna Asawa contributed to defining the research question and the experimental design together with her Ph.D. student Akanksha Mehndiratta. Both authors have made substantial contributions to the conception of this study and the design of the presented model. Akanksha Mehndiratta under the supervision of Krishna Asawa performed data acquisition, implementation of the model, analysis of the result and wrote the first draft of the manuscript. Krishna Asawa performed a critical revision of the manuscript for important intellectual content and approved the final version of the manuscript.
\end{itemize}

\begin{appendices}

\section{Proof For Lemma 1}\label{app1}
\begin{proof}
    Given an input I = (a, b) of dimension m and n respectively, the marginal mean for I is given as \[ \mu = \begin{pmatrix} \mu\textsubscript{a}\\
\mu\textsubscript{b} \end{pmatrix}\] and the   covariance matrix as \[C = \begin{pmatrix} W\textsubscript{a}W\textsubscript{a}\textsuperscript{T} + \Psi\textsubscript{a} & W\textsubscript{a}W\textsubscript{b}\textsuperscript{T}\\
W\textsubscript{b}W\textsubscript{a}\textsuperscript{T} & W\textsubscript{b}W\textsubscript{b}\textsuperscript{T} + \Psi\textsubscript{b}\end{pmatrix}\]. For input data I\textsuperscript{j}=(a\textsuperscript{j} ,b\textsuperscript{j}),j = 1,2,...k, the negative log likelihood is given as

\[
\begin{aligned}
l\textsubscript{1} = \frac{k(m + n)} { 2 } log 2\pi + \frac{k}{2} log |C| + \frac{1}{2} \sum_{j=1} ^{k} tr C\textsuperscript{-1} (I\textsuperscript{j} - \mu) (I\textsuperscript{j} - \mu)^T\\
= \frac{k(m + n)} { 2 } log 2\pi + \frac{k}{2} log |C| + \frac{k}{2} tr C\textsuperscript{-1} \Tilde{C} +  \frac{k}{2}(\Tilde{\mu} - \mu)^T C\textsuperscript{-1} (\Tilde{\mu} - \mu)\\
\end{aligned}
\]
Let us first maximize with respect to $\mu$. The maximum is obtained at sample mean ($\Tilde{\mu}$) . Updating this value in the log likelihood results in
\[
\begin{aligned}
l\textsubscript{1} = \frac{k( m + n)} { 2 } log 2\pi + \frac{n}{2} log |C| + \frac{n}{2} tr C\textsuperscript{-1} \Tilde{C}
\end{aligned}
\]
The rest of the proof follows immediately along the line of proof given by Bach and Jordan\cite{article}.
\end{proof}

\section{Proof For Lemma 2}\label{app2}
\begin{proof}
    Let the covariance between two variables be denoted as 
     \begin{align*}
     C\textsubscript{11} &= \mathop{\mathbb{E}}[a\textsubscript{1}(a\textsubscript{1})\textsuperscript{T}]\\
     C\textsubscript{22} &= \mathop{\mathbb{E}}[a\textsubscript{2}(a\textsubscript{2})\textsuperscript{T}]\\
    C\textsubscript{12} &= \mathop{\mathbb{E}}[a\textsubscript{1}(a\textsubscript{2})\textsuperscript{T}]
 \end{align*}

Let L be a linear latent state, then the covariance between two variables a\textsubscript{1} and a\textsubscript{2}, similar to the proof presented by Foster et al.\cite{foster2008multi}, is given as
\begin{align*}
C\textsubscript{12} = C\textsubscript{1L}C\textsubscript{L2}    
\end{align*}
Hence the following equality holds
\begin{align*}
C\textsubscript{1L} = C\textsubscript{12}(C\textsubscript{L2})\textsuperscript{-1}
\end{align*}
Now, the optimal linear predictor $\beta$ is given as
\begin{equation}
\begin{split}
\beta &= C\textsubscript{1L} \\
& = C\textsubscript{12}(C\textsubscript{L2})\textsuperscript{-1}     
\end{split}
\end{equation}
Hence
\begin{align*}
\beta a\textsubscript{1} = (C\textsubscript{2L})\textsuperscript{-1} C\textsubscript{21} a\textsubscript{1} \\
\end{align*}
Let the singular value decomposition of C\textsubscript{12} be:
\begin{align*}
   C\textsubscript{12} = U\textsubscript{1} \rho U\textsubscript{2}^T 
\end{align*}

here $\rho$ is diagonal with canonical directions and the column vector of U\textsubscript{1} and U\textsubscript{2} form the CCA basis. Plugging the values in 
\begin{equation}\label{beta}
\beta a\textsubscript{1} = (C\textsubscript{2L})\textsuperscript{-1} U\textsubscript{2} \rho U\textsubscript{1}^T a\textsubscript{1} \\
\end{equation}
Since $p\textsubscript{i} = 0$ where $i \neq j$ hence
\begin{align*}
    \rho U\textsubscript{1}^T a\textsubscript{1} = \rho U\textsubscript{1}^T \lambda\textsubscript{a1}
\end{align*}
Plugging this in \ref{beta}
\begin{equation}
\begin{split}
  \beta a\textsubscript{1} &= (C\textsubscript{2L})\textsuperscript{-1} U\textsubscript{2} \rho U\textsubscript{1}^T \lambda\textsubscript{a1} \\
& = \beta \lambda\textsubscript{a1} 
\end{split}
\end{equation}
Hence proving that the claim made in 1 is valid.The proof for claim 2 follows along the same lines as for claim 1.
\end{proof}
\end{appendices}


\bibliography{sn-bibliography}


\begin{thebibliography}{29}
\ifx \bisbn   \undefined \def \bisbn  #1{ISBN #1}\fi
\ifx \binits  \undefined \def \binits#1{#1}\fi
\ifx \bauthor  \undefined \def \bauthor#1{#1}\fi
\ifx \batitle  \undefined \def \batitle#1{#1}\fi
\ifx \bjtitle  \undefined \def \bjtitle#1{#1}\fi
\ifx \bvolume  \undefined \def \bvolume#1{\textbf{#1}}\fi
\ifx \byear  \undefined \def \byear#1{#1}\fi
\ifx \bissue  \undefined \def \bissue#1{#1}\fi
\ifx \bfpage  \undefined \def \bfpage#1{#1}\fi
\ifx \blpage  \undefined \def \blpage #1{#1}\fi
\ifx \burl  \undefined \def \burl#1{\textsf{#1}}\fi
\ifx \doiurl  \undefined \def \doiurl#1{\url{https://doi.org/#1}}\fi
\ifx \betal  \undefined \def \betal{\textit{et al.}}\fi
\ifx \binstitute  \undefined \def \binstitute#1{#1}\fi
\ifx \binstitutionaled  \undefined \def \binstitutionaled#1{#1}\fi
\ifx \bctitle  \undefined \def \bctitle#1{#1}\fi
\ifx \beditor  \undefined \def \beditor#1{#1}\fi
\ifx \bpublisher  \undefined \def \bpublisher#1{#1}\fi
\ifx \bbtitle  \undefined \def \bbtitle#1{#1}\fi
\ifx \bedition  \undefined \def \bedition#1{#1}\fi
\ifx \bseriesno  \undefined \def \bseriesno#1{#1}\fi
\ifx \blocation  \undefined \def \blocation#1{#1}\fi
\ifx \bsertitle  \undefined \def \bsertitle#1{#1}\fi
\ifx \bsnm \undefined \def \bsnm#1{#1}\fi
\ifx \bsuffix \undefined \def \bsuffix#1{#1}\fi
\ifx \bparticle \undefined \def \bparticle#1{#1}\fi
\ifx \barticle \undefined \def \barticle#1{#1}\fi
\bibcommenthead
\ifx \bconfdate \undefined \def \bconfdate #1{#1}\fi
\ifx \botherref \undefined \def \botherref #1{#1}\fi
\ifx \url \undefined \def \url#1{\textsf{#1}}\fi
\ifx \bchapter \undefined \def \bchapter#1{#1}\fi
\ifx \bbook \undefined \def \bbook#1{#1}\fi
\ifx \bcomment \undefined \def \bcomment#1{#1}\fi
\ifx \oauthor \undefined \def \oauthor#1{#1}\fi
\ifx \citeauthoryear \undefined \def \citeauthoryear#1{#1}\fi
\ifx \endbibitem  \undefined \def \endbibitem {}\fi
\ifx \bconflocation  \undefined \def \bconflocation#1{#1}\fi
\ifx \arxivurl  \undefined \def \arxivurl#1{\textsf{#1}}\fi
\csname PreBibitemsHook\endcsname

\bibitem[\protect\citeauthoryear{Guo et~al.}{2021}]{guo2021manifold}
\begin{bchapter}
\bauthor{\bsnm{Guo}, \binits{Z.}},
\bauthor{\bsnm{Gao}, \binits{L.}},
\bauthor{\bsnm{Guan}, \binits{L.}}:
\bctitle{A manifold semantic canonical correlation framework for effective feature fusion}.
In: \bbtitle{2021 IEEE 4th International Conference on Multimedia Information Processing and Retrieval (MIPR)},
pp. \bfpage{15}--\blpage{20}
(\byear{2021}).
\bcomment{IEEE}
\end{bchapter}
\endbibitem

\bibitem[\protect\citeauthoryear{Jia et~al.}{2019}]{jia2019semantically}
\begin{barticle}
\bauthor{\bsnm{Jia}, \binits{Y.}},
\bauthor{\bsnm{Bai}, \binits{L.}},
\bauthor{\bsnm{Liu}, \binits{S.}},
\bauthor{\bsnm{Wang}, \binits{P.}},
\bauthor{\bsnm{Guo}, \binits{J.}},
\bauthor{\bsnm{Xie}, \binits{Y.}}:
\batitle{Semantically-enhanced kernel canonical correlation analysis: a multi-label cross-modal retrieval}.
\bjtitle{Multimedia Tools and Applications}
\bvolume{78},
\bfpage{13169}--\blpage{13188}
(\byear{2019})
\end{barticle}
\endbibitem

\bibitem[\protect\citeauthoryear{Foster et~al.}{2008}]{foster2008multi}
\begin{botherref}
\oauthor{\bsnm{Foster}, \binits{D.P.}},
\oauthor{\bsnm{Kakade}, \binits{S.M.}},
\oauthor{\bsnm{Zhang}, \binits{T.}}:
Multi-view dimensionality reduction via canonical correlation analysis
(2008)
\end{botherref}
\endbibitem

\bibitem[\protect\citeauthoryear{Mann and Thompson}{1988}]{mann1988rhetorical}
\begin{barticle}
\bauthor{\bsnm{Mann}, \binits{W.C.}},
\bauthor{\bsnm{Thompson}, \binits{S.A.}}:
\batitle{Rhetorical structure theory: Toward a functional theory of text organization}.
\bjtitle{Text-interdisciplinary Journal for the Study of Discourse}
\bvolume{8}(\bissue{3}),
\bfpage{243}--\blpage{281}
(\byear{1988})
\end{barticle}
\endbibitem

\bibitem[\protect\citeauthoryear{Lascarides and Asher}{2007}]{lascarides2007segmented}
\begin{bchapter}
\bauthor{\bsnm{Lascarides}, \binits{A.}},
\bauthor{\bsnm{Asher}, \binits{N.}}:
\bctitle{Segmented discourse representation theory: Dynamic semantics with discourse structure}.
In: \bbtitle{Computing Meaning},
pp. \bfpage{87}--\blpage{124}.
\bpublisher{Springer}, \blocation{???}
(\byear{2007})
\end{bchapter}
\endbibitem

\bibitem[\protect\citeauthoryear{Prasad et~al.}{2007}]{prasad2007penn}
\begin{barticle}
\bauthor{\bsnm{Prasad}, \binits{R.}},
\bauthor{\bsnm{Miltsakaki}, \binits{E.}},
\bauthor{\bsnm{Dinesh}, \binits{N.}},
\bauthor{\bsnm{Lee}, \binits{A.}},
\bauthor{\bsnm{Joshi}, \binits{A.}},
\bauthor{\bsnm{Robaldo}, \binits{L.}},
\bauthor{\bsnm{Webber}, \binits{B.}}:
\batitle{The penn discourse treebank 2.0 annotation manual}.
\bjtitle{December}
\bvolume{17},
\bfpage{2007}
(\byear{2007})
\end{barticle}
\endbibitem

\bibitem[\protect\citeauthoryear{Li et~al.}{2018}]{li2018segbot}
\begin{bchapter}
\bauthor{\bsnm{Li}, \binits{J.}},
\bauthor{\bsnm{Sun}, \binits{A.}},
\bauthor{\bsnm{Joty}, \binits{S.R.}}:
\bctitle{Segbot: A generic neural text segmentation model with pointer network.}
In: \bbtitle{IJCAI},
pp. \bfpage{4166}--\blpage{4172}
(\byear{2018})
\end{bchapter}
\endbibitem

\bibitem[\protect\citeauthoryear{Wang et~al.}{2018}]{wang2018toward}
\begin{botherref}
\oauthor{\bsnm{Wang}, \binits{Y.}},
\oauthor{\bsnm{Li}, \binits{S.}},
\oauthor{\bsnm{Yang}, \binits{J.}}:
Toward fast and accurate neural discourse segmentation.
arXiv preprint arXiv:1808.09147
(2018)
\end{botherref}
\endbibitem

\bibitem[\protect\citeauthoryear{Lukasik et~al.}{2020}]{lukasik2020text}
\begin{botherref}
\oauthor{\bsnm{Lukasik}, \binits{M.}},
\oauthor{\bsnm{Dadachev}, \binits{B.}},
\oauthor{\bsnm{Simoes}, \binits{G.}},
\oauthor{\bsnm{Papineni}, \binits{K.}}:
Text segmentation by cross segment attention.
arXiv preprint arXiv:2004.14535
(2020)
\end{botherref}
\endbibitem

\bibitem[\protect\citeauthoryear{Bakshi and Sharma}{2021}]{bakshi2021transformer}
\begin{bchapter}
\bauthor{\bsnm{Bakshi}, \binits{S.}},
\bauthor{\bsnm{Sharma}, \binits{D.M.}}:
\bctitle{A transformer based approach towards identification of discourse unit segments and connectives}.
In: \bbtitle{Proceedings of the 2nd Shared Task on Discourse Relation Parsing and Treebanking (DISRPT 2021)},
pp. \bfpage{13}--\blpage{21}
(\byear{2021})
\end{bchapter}
\endbibitem

\bibitem[\protect\citeauthoryear{Asher et~al.}{2016}]{asher2016discourse}
\begin{bchapter}
\bauthor{\bsnm{Asher}, \binits{N.}},
\bauthor{\bsnm{Hunter}, \binits{J.}},
\bauthor{\bsnm{Morey}, \binits{M.}},
\bauthor{\bsnm{Benamara}, \binits{F.}},
\bauthor{\bsnm{Afantenos}, \binits{S.}}:
\bctitle{Discourse structure and dialogue acts in multiparty dialogue: the stac corpus}.
In: \bbtitle{10th International Conference on Language Resources and Evaluation (LREC 2016)},
pp. \bfpage{2721}--\blpage{2727}
(\byear{2016})
\end{bchapter}
\endbibitem

\bibitem[\protect\citeauthoryear{Li et~al.}{2020}]{li2020molweni}
\begin{botherref}
\oauthor{\bsnm{Li}, \binits{J.}},
\oauthor{\bsnm{Liu}, \binits{M.}},
\oauthor{\bsnm{Kan}, \binits{M.-Y.}},
\oauthor{\bsnm{Zheng}, \binits{Z.}},
\oauthor{\bsnm{Wang}, \binits{Z.}},
\oauthor{\bsnm{Lei}, \binits{W.}},
\oauthor{\bsnm{Liu}, \binits{T.}},
\oauthor{\bsnm{Qin}, \binits{B.}}:
Molweni: A challenge multiparty dialogues-based machine reading comprehension dataset with discourse structure.
arXiv preprint arXiv:2004.05080
(2020)
\end{botherref}
\endbibitem

\bibitem[\protect\citeauthoryear{Afantenos et~al.}{2015}]{afantenos2015discourse}
\begin{bchapter}
\bauthor{\bsnm{Afantenos}, \binits{S.}},
\bauthor{\bsnm{Kow}, \binits{E.}},
\bauthor{\bsnm{Asher}, \binits{N.}},
\bauthor{\bsnm{Perret}, \binits{J.}}:
\bctitle{Discourse parsing for multi-party chat dialogues}.
In: \bbtitle{Conference on Empirical Methods on Natural Language Processing (EMNLP 2015)},
p. \bfpage{928}
(\byear{2015})
\end{bchapter}
\endbibitem

\bibitem[\protect\citeauthoryear{Perret et~al.}{2016}]{perret2016integer}
\begin{bchapter}
\bauthor{\bsnm{Perret}, \binits{J.}},
\bauthor{\bsnm{Afantenos}, \binits{S.}},
\bauthor{\bsnm{Asher}, \binits{N.}},
\bauthor{\bsnm{Morey}, \binits{M.}}:
\bctitle{Integer linear programming for discourse parsing}.
In: \bbtitle{Proceedings of the 2016 Conference of the North American Chapter of the Association for Computational Linguistics: Human Language Technologies},
pp. \bfpage{99}--\blpage{109}
(\byear{2016})
\end{bchapter}
\endbibitem

\bibitem[\protect\citeauthoryear{Shi and Huang}{2018}]{Shi2018ADS}
\begin{botherref}
\oauthor{\bsnm{Shi}, \binits{Z.}},
\oauthor{\bsnm{Huang}, \binits{M.}}:
A deep sequential model for discourse parsing on multi-party dialogues.
ArXiv
\textbf{abs/1812.00176}
(2018)
\end{botherref}
\endbibitem

\bibitem[\protect\citeauthoryear{Majumder et~al.}{2020}]{Majumder2020InterviewLM}
\begin{bchapter}
\bauthor{\bsnm{Majumder}, \binits{B.P.}},
\bauthor{\bsnm{Li}, \binits{S.}},
\bauthor{\bsnm{Ni}, \binits{J.}},
\bauthor{\bsnm{McAuley}, \binits{J.}}:
\bctitle{Interview: Large-scale modeling of media dialog with discourse patterns and knowledge grounding}.
In: \bbtitle{Conference on Empirical Methods in Natural Language Processing}
(\byear{2020})
\end{bchapter}
\endbibitem

\bibitem[\protect\citeauthoryear{Gu et~al.}{2020}]{Gu2020DialogBERTDR}
\begin{botherref}
\oauthor{\bsnm{Gu}, \binits{X.}},
\oauthor{\bsnm{Yoo}, \binits{K.M.}},
\oauthor{\bsnm{Ha}, \binits{J.-W.}}:
Dialogbert: Discourse-aware response generation via learning to recover and rank utterances.
ArXiv
\textbf{abs/2012.01775}
(2020)
\end{botherref}
\endbibitem

\bibitem[\protect\citeauthoryear{Santra et~al.}{2021}]{Santra2021RepresentationLF}
\begin{bchapter}
\bauthor{\bsnm{Santra}, \binits{B.}},
\bauthor{\bsnm{Roychowdhury}, \binits{S.}},
\bauthor{\bsnm{Mandal}, \binits{A.}},
\bauthor{\bsnm{Gurram}, \binits{V.}},
\bauthor{\bsnm{Naik}, \binits{A.}},
\bauthor{\bsnm{Gupta}, \binits{M.}},
\bauthor{\bsnm{Goyal}, \binits{P.}}:
\bctitle{Representation learning for conversational data using discourse mutual information maximization}.
In: \bbtitle{North American Chapter of the Association for Computational Linguistics}
(\byear{2021})
\end{bchapter}
\endbibitem

\bibitem[\protect\citeauthoryear{Hotelling}{1936}]{Hotelling1936RelationsBT}
\begin{barticle}
\bauthor{\bsnm{Hotelling}, \binits{H.}}:
\batitle{Relations between two sets of variates}.
\bjtitle{Biometrika}
\bvolume{28},
\bfpage{321}--\blpage{377}
(\byear{1936})
\end{barticle}
\endbibitem

\bibitem[\protect\citeauthoryear{Bach and Jordan}{2005}]{article}
\begin{botherref}
\oauthor{\bsnm{Bach}, \binits{F.}},
\oauthor{\bsnm{Jordan}, \binits{M.}}:
A probabilistic interpretation of canonical correlation analysis
(2005)
\end{botherref}
\endbibitem

\bibitem[\protect\citeauthoryear{Dhillon et~al.}{2015}]{dhillon2015eigenwords}
\begin{barticle}
\bauthor{\bsnm{Dhillon}, \binits{P.S.}},
\bauthor{\bsnm{Foster}, \binits{D.P.}},
\bauthor{\bsnm{Ungar}, \binits{L.H.}}:
\batitle{Eigenwords: spectral word embeddings.}
\bjtitle{J. Mach. Learn. Res.}
\bvolume{16},
\bfpage{3035}--\blpage{3078}
(\byear{2015})
\end{barticle}
\endbibitem

\bibitem[\protect\citeauthoryear{Pennington et~al.}{2014}]{pennington2014glove}
\begin{bchapter}
\bauthor{\bsnm{Pennington}, \binits{J.}},
\bauthor{\bsnm{Socher}, \binits{R.}},
\bauthor{\bsnm{Manning}, \binits{C.D.}}:
\bctitle{Glove: Global Vectors for Word Representation}.
In: \bbtitle{Proceedings of the 2014 Conference on Empirical Methods in Natural Language Processing (EMNLP)},
pp. \bfpage{1532}--\blpage{1543}
(\byear{2014})
\end{bchapter}
\endbibitem

\bibitem[\protect\citeauthoryear{Cer et~al.}{2017}]{cer-etal-2017-semeval}
\begin{bchapter}
\bauthor{\bsnm{Cer}, \binits{D.}},
\bauthor{\bsnm{Diab}, \binits{M.}},
\bauthor{\bsnm{Agirre}, \binits{E.}},
\bauthor{\bsnm{Lopez-Gazpio}, \binits{I.}},
\bauthor{\bsnm{Specia}, \binits{L.}}:
\bctitle{{S}em{E}val-2017 Task 1: Semantic Textual Similarity Multilingual and Crosslingual Focused Evaluation}.
In: \bbtitle{Proceedings of the 11th International Workshop on Semantic Evaluation ({S}em{E}val-2017)},
pp. \bfpage{1}--\blpage{14}.
\bpublisher{Association for Computational Linguistics},
\blocation{Vancouver, Canada}
(\byear{2017}).
\doiurl{10.18653/v1/S17-2001} .
\burl{https://aclanthology.org/S17-2001}
\end{bchapter}
\endbibitem

\bibitem[\protect\citeauthoryear{Mohler and Mihalcea}{2009}]{Mohler2009TexttoTextSS}
\begin{bchapter}
\bauthor{\bsnm{Mohler}, \binits{M.}},
\bauthor{\bsnm{Mihalcea}, \binits{R.}}:
\bctitle{Text-to-text semantic similarity for automatic short answer grading}.
In: \bbtitle{Conference of the European Chapter of the Association for Computational Linguistics}
(\byear{2009}).
\burl{https://api.semanticscholar.org/CorpusID:2233498}
\end{bchapter}
\endbibitem

\bibitem[\protect\citeauthoryear{Wang et~al.}{2018}]{wang-etal-2018-glue}
\begin{bchapter}
\bauthor{\bsnm{Wang}, \binits{A.}},
\bauthor{\bsnm{Singh}, \binits{A.}},
\bauthor{\bsnm{Michael}, \binits{J.}},
\bauthor{\bsnm{Hill}, \binits{F.}},
\bauthor{\bsnm{Levy}, \binits{O.}},
\bauthor{\bsnm{Bowman}, \binits{S.}}:
\bctitle{{GLUE}: A multi-task benchmark and analysis platform for natural language understanding}.
In: \beditor{\bsnm{Linzen}, \binits{T.}},
\beditor{\bsnm{Chrupa{\l}a}, \binits{G.}},
\beditor{\bsnm{Alishahi}, \binits{A.}} (eds.)
\bbtitle{Proceedings of the 2018 {EMNLP} Workshop {B}lackbox{NLP}: Analyzing and Interpreting Neural Networks For {NLP}},
pp. \bfpage{353}--\blpage{355}.
\bpublisher{Association for Computational Linguistics},
\blocation{Brussels, Belgium}
(\byear{2018}).
\doiurl{10.18653/v1/W18-5446} .
\burl{https://aclanthology.org/W18-5446}
\end{bchapter}
\endbibitem

\bibitem[\protect\citeauthoryear{McCann et~al.}{2017}]{mccann2017learned}
\begin{botherref}
\oauthor{\bsnm{McCann}, \binits{B.}},
\oauthor{\bsnm{Bradbury}, \binits{J.}},
\oauthor{\bsnm{Xiong}, \binits{C.}},
\oauthor{\bsnm{Socher}, \binits{R.}}:
Learned in translation: Contextualized word vectors.
Advances in neural information processing systems
\textbf{30}
(2017)
\end{botherref}
\endbibitem

\bibitem[\protect\citeauthoryear{Dasgupta et~al.}{2023}]{dasgupta-etal-2023-cost}
\begin{bchapter}
\bauthor{\bsnm{Dasgupta}, \binits{S.}},
\bauthor{\bsnm{Cohn}, \binits{T.}},
\bauthor{\bsnm{Baldwin}, \binits{T.}}:
\bctitle{Cost-effective distillation of large language models}.
In: \beditor{\bsnm{Rogers}, \binits{A.}},
\beditor{\bsnm{Boyd-Graber}, \binits{J.}},
\beditor{\bsnm{Okazaki}, \binits{N.}} (eds.)
\bbtitle{Findings of the Association for Computational Linguistics: ACL 2023},
pp. \bfpage{7346}--\blpage{7354}.
\bpublisher{Association for Computational Linguistics},
\blocation{Toronto, Canada}
(\byear{2023}).
\doiurl{10.18653/v1/2023.findings-acl.463} .
\burl{https://aclanthology.org/2023.findings-acl.463}
\end{bchapter}
\endbibitem

\bibitem[\protect\citeauthoryear{Zhu et~al.}{2022}]{Zhu2022AutomaticSG}
\begin{barticle}
\bauthor{\bsnm{Zhu}, \binits{X.}},
\bauthor{\bsnm{Wu}, \binits{H.}},
\bauthor{\bsnm{Zhang}, \binits{L.}}:
\batitle{Automatic short-answer grading via bert-based deep neural networks}.
\bjtitle{IEEE Transactions on Learning Technologies}
\bvolume{15},
\bfpage{364}--\blpage{375}
(\byear{2022})
\end{barticle}
\endbibitem

\bibitem[\protect\citeauthoryear{Gaddipati et~al.}{2020}]{Gaddipati2020ComparativeEO}
\begin{botherref}
\oauthor{\bsnm{Gaddipati}, \binits{S.K.}},
\oauthor{\bsnm{Nair}, \binits{D.}},
\oauthor{\bsnm{Pl{\"o}ger}, \binits{P.-G.}}:
Comparative evaluation of pretrained transfer learning models on automatic short answer grading.
ArXiv
\textbf{abs/2009.01303}
(2020)
\end{botherref}
\endbibitem

\end{thebibliography}

\end{document}